\newtheorem{theorem}{Theorem}
\newtheorem{lemma}[theorem]{Lemma}
\newtheorem*{claim*}{Claim}
\newtheorem*{fact*}{Fact}
\newcommand{\NP}{$\textsf{NP}$}
\newcommand{\NPPP}{$\textsf{NP}^\textsf{PP}$}
\newcommand{\EX}{$\textsc{ex}$}
\newcommand{\defeq}{\stackrel{\text{def}}{=}} 
\renewcommand{\vec}[1]{\bm{#1}}
\newcommand{\size}[1]{|#1|}
\newcommand{\Body}[1]{\mathit{body}{(#1)}}
\newcommand{\Head}[1]{\mathit{head}{(#1)}}
\title{Learning Model Agnostic Explanations via Constraint Programming}
\author{Frederic Koriche}
\author{Jean-Marie Lagniez}
\author{Stefan Mengel}
\author{Chi Tran}
\affil{Univ.~Artois, CNRS, Centre de Recherche en Informatique de Lens (CRIL), France}
\begin{document}
    
    \maketitle              

\begin{abstract}
    Interpretable Machine Learning faces a recurring challenge of explaining the predictions made by opaque classifiers such as ensemble models, 
    kernel methods, or neural networks in terms that are understandable to humans. 
    When the model is viewed as a black box, the objective is to identify a small set of features that jointly determine the black box response with minimal error. 
    However, finding such model-agnostic explanations is computationally demanding, as the problem is intractable even for binary classifiers. 
    In this paper, the task is framed as a Constraint Optimization Problem, where the constraint solver seeks an explanation of minimum error and bounded size for 
    an input data instance and a set of samples generated by the black box. From a theoretical perspective, this constraint programming approach 
    offers PAC-style guarantees for the output explanation. We evaluate the approach empirically on various datasets and show that
    it statistically outperforms the state-of-the-art heuristic \textsc{Anchors} method.
\end{abstract}

\section{Introduction}

With the increasing influence of machine learning systems in our daily lives, 
there is a pressing need to comprehend the reasoning behind their predictions. 
However, the current black-box nature of predictive models such as, for example, 
ensemble methods, kernel techniques or neural networks, often leaves users bewildered. 
In order to address this challenge, the field of Interpretable Machine Learning (IML) has emerged, 
which focuses on developing new learning and explanation techniques to make 
predictive models more transparent and comprehensible \cite{Burkart.JAIR.2021,Molnar.Book.2022}.

A common task in IML is explaining the predictions made by a classifier 
on data instances in terms that are easy for humans to understand. 
For example, when evaluating a personal loan application based on tabular data about the applicant, 
features such as a stable income, a low debt-to-income ratio, and the availability of a co-signer 
may have a collective impact on the loan approval. 
When the classifier is viewed as a black box with only query access to some instance-to-label function $f$, 
the set of features $S$ used to explain the output $f(\vec x)$ of an instance $\vec x$ is referred to as 
\emph{model agnostic explanation} \cite{Lundberg.NeurIPS.2017,Ribeiro.KDD.2016,Ribeiro.AAAI.2018}. 

One important aspect of model agnostic explanations is their \emph{flexibility}.  
Such explanations can indeed be provided without any knowledge of the internal structure and parameters of the predictive model. 
This property is not only useful for deciphering the predictions made by complex models but also 
for maintaining confidentiality in cases where the internal components of the model cannot be revealed.
However, flexibility is not the only criterion for assessing the quality of explanations.
Explaining the predictions of black-box models involves balancing two other important criteria. 

The first is the \emph{precision}, or \emph{fidelity} \cite{Molnar.Book.2022}, of the explanation in approximating the model's predictions. 
An explanation $S$ for an instance $\vec x$ and a model $f$ is considered \emph{abductive} if changes in feature values outside of the explanation $S$ do not affect the model's output $f(\vec x)$. 
When $f$ is a Boolean function, the conjunction of features in the abductive explanation $S$ can be viewed as an implicant of $f$ that covers the instance $\vec x$. 
Abductive explanations have received significant attention in the literature due to their perfect fidelity \cite{Audemard.AAAI.2022,Cooper.AI.2023,Darwiche.ECAI.2020,Ignatiev.AAAI.2022,Ignatiev.AAAI.2019}.

The second criterion to consider is the \emph{conciseness} of the explanation, which is limited by the human ability to reason about multiple interacting factors. 
As Miller conjectured in 1956 \cite{Miller.PR.1956}, this cognitive limit is of seven plus or minus two elements, 
which has been confirmed by numerous experiments in cognitive science (see e.g. \cite{Saaty.MCM.2003}). 
Abductive explanations suffer from a significant weakness in this regard, as their size is often uncontrollable. 
Even the smallest abductive explanation may require too many features to be logically sound and comprehensible to humans. 
Therefore, finding a balance between precision and conciseness is crucial when generating explanations for black-box models.

The concept of \emph{probabilistic explanation}, introduced by Blanc et al. \cite{Blanc.NeurIPS.2021} and Waeldchen et al. \cite{Waeldchen.JAIR.2021}, 
captures this balance. Informally, the precision error $\epsilon_{f,\vec x}(S)$ of an explanation $S$ for an instance $\vec x$ and a model $f$ is defined by the probability 
that a random instance $\vec z$ is classified differently from $\vec x$ by $f$, when $\vec z$ and $\vec x$ 
agree on all the features in $S$. In other words, $\epsilon_{f,\vec x}(S)$ is the probability of making a mistake in predicting 
the black-box response $f(\vec x)$ using only the features in $S$. 
With this notion in hand, the goal of the present study is to find a model-agnostic explanation $S$ that is no larger than a small constant $k$, 
while minimizing the error $\epsilon_{f,\vec x}(S)$ for a given input $\vec x$ and a black-box model $f$.  

This optimization task is very computationally intensive. 
Even in the model-specific case where we have a Boolean circuit representation of $f$, 
finding the minimizer $S$ of $\epsilon_{f,\vec x}(S)$ while satisfying the cardinality constraint $\size{S} \leq k$ is \NPPP-hard \cite{Waeldchen.JAIR.2021}. 
This difficulty persists even in the case where $f$ is represented by a binary decision tree, as the problem remains \NP-hard \cite{Arenas.NeurIPS.2022}.

\subsection{Contributions}

This paper introduces a framework for learning explanations that are independent of the model, consisting of two main ideas. 
As any explanation can be represented as an \emph{if-then} rule \cite{Ribeiro.AAAI.2018}, we consider a hypothesis class 
that includes all rules whose body is a subset of features of the data instance $\vec x$ to be explained, 
and whose head is the prediction $f(\vec x)$. We first demonstrate that an efficient agnostic PAC learner for this hypothesis class 
can lead to an efficient model-agnostic explainer. However, since this hypothesis class is equivalent to the class of monotone monomials, 
the learning problem remains \NP-hard. Nevertheless, we next show that it can be formulated as a Constraint Optimization Problem (COP) using a 
few linear constraints and channeling constraints. By combining these two ideas, the optimal value attained by the constraint solver 
provides a bounded approximation of the objective value for the precision function $\epsilon_{f,\vec x}(\cdot)$.

From an empirical standpoint, our approach is compared with the well-known model-agnostic explainer \textsc{Anchors} \cite{Ribeiro.AAAI.2018}. 
By conducting experiments on various datasets, we show that our approach outperforms \textsc{Anchors} in terms of 
precision, while using a reasonable amount of time for the solver.

\subsection{Related Work}

In the field of Interpretable Machine Learning, researchers have proposed various techniques to understand the behavior of machine learning models. 
\emph{Post hoc explanations} are one such technique that aims to provide insights into how a model works without affecting its performance. 
These explanations give additional knowledge about the prediction $f(\vec x)$ made by a model $f$ on a specific data instance $\vec x$. 
Common types of post hoc explanations are \emph{feature-based explanations} \cite{Lundberg.NeurIPS.2017,Ribeiro.AAAI.2018}, 
which interpret the prediction $f(\vec x)$ by identifying the most relevant features of the instance $\vec x$, 
and \emph{example-based explanations} \cite{Karimi.AISTATS.2020,Koh.ICML.2017}, which improves the comprehension of  
$f(\vec x)$ using counterfactual or influential examples. Feature-based explanations, examined in this study, 
can be categorized into two types: \emph{model-specific} and \emph{model-agnostic} explanations. 

\subsubsection{Model-Specific Explanations.} 

By exploiting the structure and parameters of the predictive model $f$, 
model-specific approaches aim to provide precise and mathematically verifiable explanations.
Notably, when $f$ is a Boolean classifier, a common explanation for predicting the output $f(\vec x)$ of an input instance $\vec x$ 
is a subset-minimal collection of features $S$ such that the restriction $\vec x_S$ of $\vec x$ to $S$ determines $f(\vec x)$. 
Such an \emph{abductive} explanation \cite{Ignatiev.AAAI.2019}, also called \emph{sufficient reason} \cite{Darwiche.ECAI.2020}, 
is perfectly accurate since $\vec x_S$ is a prime implicant of $f$ that covers $\vec x$ \cite{Shih.IJCAI.2018}.
Although finding an abductive explanation is generally an \NP-hard problem, tractable cases have been identified 
for various model classes including, among others, decision trees \cite{Audemard.DKE.2022,Huang.KR.2021,Izza.JAIR.2022}, 
Naive Bayes classifiers \cite{MarquesSilva.NeurIPS.2020}, and monotone threshold functions \cite{Cooper.AI.2023}. 
Actually, even for intractable classes such as decision lists and random forests, 
empirical results indicate that abductive explanations can be quickly found in practice using constraint-based approaches 
\cite{Audemard.AAAI.2022,Ignatiev.AAAI.2022,Ignatiev.SAT.2021}.

In order to overcome the uncontrollable size of abductive explanations, 
model-specific approaches have recently focused on \emph{probabilistic explanations}, 
which convey a natural trade-off between conciseness and precision \cite{Bounia.UAI.2023,Izza.JAR.2023}. 
In simple terms, a $(k,\varepsilon)$-probabilistic explanation for a classifier $f$ and an instance $\vec x$ 
is a subset $S$  of features such that $\size{S} \leq k$ and $\epsilon_{f,\vec x}(S) \leq \varepsilon$. 
Unfortunately, finding probabilistic explanations is generally more difficult than finding abductive explanations. 
Indeed, the problem of deciding whether a Boolean circuit representation of $f$ admits a $(k,\varepsilon)$-probabilistic 
explanation for an instance $\vec x$ is complete for \NPPP \cite{Waeldchen.JAIR.2021}, a complexity class beyond the reach of modern solvers.  

\subsubsection{Model-Agnostic Explanations.} 

Unlike model-specific explanations, model-agnostic explanations do not rely on any assumptions 
about the inner workings of the predictive model $f$. 
Instead, $f$ is treated as a black box and accessed through membership queries.
In post-hoc explanations, model-agnostic techniques seek a surrogate that approximates 
the model $f$ in some local neighborhood of the instance to be explained $\vec x$. 
This neighborhood is explored using sampling and perturbation strategies. 
When the model is a classifier, the surrogate provided by the \textsc{Lime} approach \cite{Ribeiro.KDD.2016} and its variants 
\cite{Dhurandhar.NeurIPS.2022,Zhao.UAI.2021} takes the form of a linear threshold function. 
Similarly, in the \textsc{Shap} approach \cite{Lundberg.NeurIPS.2017},  
the returned explanation is a linear function where each entry approximates 
the Shapley value of the corresponding feature that contributes to the outcome. 
Arguably, the \textsc{Anchors} approach \cite{Ribeiro.AAAI.2018} is the most relevant work to our study. 
It represents the surrogate of the model $f$ in the neighborhood of the instance $\vec x$ as an easy-to-understand \emph{if-then} rule. 
This rule can be described as a set $S$ of features which jointly determine the outcome $f(\vec x)$.

Although model-agnostic approaches are appealingly flexible, most of them are heuristic, 
meaning that they do not provide any theoretical guarantees regarding the quality or size of the 
explanations they generate. In practice, these approaches often return incorrect explanations \cite{Ignatiev.2020}, 
which can make them even less reliable. Such observations highlight the necessity of alternative techniques that preserve 
the flexibility of model agnosticism but provide precision and conciseness guarantees.

To the best of our knowledge, only Blanc et al. \cite{Blanc.NeurIPS.2021} have researched this area. 
Their objective is to find approximations of $(k,\varepsilon)$-probabilistic explanations for random data instances, 
using only query access to the model $f$. Based on some techniques in implicit learning, their result provides 
PAC-style guarantees on the quality and size of the approximate explanations. 
However, their approach is difficult to apply in practice since its runtime complexity is prohibitive. 
Our framework, in contrast, provides similar guarantees, using a restricted number of calls to an \NP-oracle, 
implemented by a constraint solver.

\section{Preliminaries}

In model-agnostic approaches to Interpretable Machine Learning, the predictive model is treated as a black box. 
However, if we want to explain the model's response to input data in a way that humans can understand, 
we require a set of \emph{interpretable features} \cite{Ribeiro.KDD.2016,Ribeiro.AAAI.2018}. 
For example, when explaining a loan application, attributes such as ``stable income'', ``low debt-to-income ratio'', 
and ``proof of address'' can be considered as interpretable features. Any instance to be explained is 
a representation indicating the presence or absence of each of these features. In classification scenarios where 
the model's output is a label, the goal of the explainer is to pinpoint a few interpretable features that 
collectively determine this label with high precision.

In more formal terms, let $[d]$ denote the set $\{1,\cdots,d\}$ of interpretable features.
The black-box models under consideration in this study are binary classifiers of the form $f: \{0,1\}^d \rightarrow \{0,1\}$, 
Any input instance $\vec x$ is a $d$-dimensional Boolean vector, where $x_j$ denotes the (Boolean) value 
of the $j$th interpretable feature in $[d]$. 
A \emph{partial instance} is a vector $\vec u \in \{0,1,*\}^d$, where $u_j = *$ indicates that the $j$th feature of 
$\vec u$ is left undefined.
An instance $\vec x$ is \emph{covered} by $\vec u$, if $x_j = u_j$ for all features $j \in [d]$ such that $u_j \neq *$. 
For a subset $S \subseteq [d]$, the \emph{restriction} of $\vec x$ to $S$, denoted $\vec x_S$, is the partial instance 
such that, for each $j \in [d]$, $(x_S)_j = x_j$ if $j \in S$, and $(x_S)_j = *$ otherwise. 
Clearly, any instance $\vec z \in \{0,1\}^d$ is covered by $\vec x_S$ if and only if $\vec z_S = \vec x_S$.  

Given a size limit $k \geq 0$, a \emph{$k$-explanation} for a data instance $\vec x \in \{0,1\}^d$ with respect to a black-box model $f: \{0,1\}^d \rightarrow \{0,1\}$ 
consists in a set of features $S \subseteq [d]$ such that $\size{S} \leq k$. 
For a probability distribution $\mathcal D$ over $\{0,1\}^d$, the \emph{precision error} of $S$ is given by
\begin{align}
\label{def.precision}
\epsilon_{f,\vec x,\mathcal D}(S) \defeq \mathbb P_{\vec z \sim \mathcal D} [f(\vec z) \neq f(\vec x) \mid \vec z_S = \vec x_S] 
\end{align}
where $\vec z \sim \mathcal D$ indicates that $\vec z$ is chosen according to the distribution over $\mathcal D$.
As mentioned earlier, $\epsilon_{f,\vec x,\mathcal D}(S)$ represents the probability of making an incorrect prediction 
when using the partial instance $\vec x_S$ instead of the complete instance $\vec x$. 
It is worth noting that the definition of precision error, originally introduced in \cite{Blanc.NeurIPS.2021,Waeldchen.JAIR.2021},
is generalized by Equation (\ref{def.precision}) where $\mathcal D$ is not restricted to the uniform distribution over $\{0,1\}^d$. 
By allowing the use of an arbitrary distribution $\mathcal D$ in our study, 
we can explore interesting cases where, for example, the precision error is measured on the neighborhood 
of the instance to explain. In the rest of this text, we will exclude $\mathcal D$ from the subscript of (\ref{def.precision}) 
and other definitions whenever it is understood from the context.

Since $f$ is a black box with only query access, we cannot evaluate the precision of $S$ exactly, unless we ask for an exponential number 
of membership queries which is clearly infeasible for nontrivial values of $d$. 
Instead, we approximate the precision of $S$ using an \emph{example oracle} $\EX(f,\mathcal D)$. 
On each call, $\EX(f,\mathcal D)$ returns a labeled instance $(\vec z, f(\vec z))$, where $\vec z$ is drawn at random according to $\mathcal D$. 

With these notions in hand, the problem examined in this study is the following. Given a black-box model $f$, a probability distribution $\mathcal D$, 
and a size limit $k$, for each data instance $\vec x$ picked at random according to $\mathcal D$, find a $k$-explanation $S$ for $f$ and $\vec x$ 
that minimizes the objective function $\epsilon_{f,\vec x}(\cdot)$. In other words, our goal is to minimize  
\begin{align}
    \label{eq.problem}
    \mathbb E_{\vec x \sim \mathcal D} [\epsilon_{f,\vec x}(S)] \;\mbox{ subject to }\; \size{S} \leq k
\end{align}
The performance of a model-agnostic explanation algorithm for a model $f$ is measured by its expected precision error over $\mathcal D$
and the number of calls to the example oracle to $\EX(f,\mathcal D)$. 

\section{The Framework}

As we mentioned earlier, providing concise and precise explanations for arbitrary classifiers is a challenging task. 
Therefore, we cannot hope to offer polynomial-time algorithms for such a problem. 
So, in order to address this challenge, our framework relies on two key concepts. First, we approximate the explanation task 
by learning an \textsc{if-then} rule where the body is a subset of literals occurring in the instance $\vec x$ to explain,
and the head is the predicted label $f(\vec x)$. Next, we solve this learning task through constraint optimization. 
The low generalization error of the rule provided by the solver is the connection between these two concepts, 
which is used to derive high-precision explanations.

In what follows, we use $\mathbbm 1[\cdot]$ to denote the indicator function. Specifically, for an event $E$, $\mathbbm 1[E] = 1$ if $E$ is true,
and $\mathbbm 1[E] = 0$ otherwise. 

\subsection{From Explanations to Rules}

For a set $\{X_j\}_{j=1}^d$ of Boolean variables, a \emph{literal} is an expression of the form $X_j = v_j$, where $v_j$ is a Boolean value in $\{0,1\}$,
and a \emph{monomial} is a conjunction of literals. An \textsc{if-then} \emph{rule} $R$ consists in a monomial, denoted $\Body{R}$, and a 
label in $\{0,1\}$, denoted $\Head{R}$. 
A \emph{$k$-monomial} is a monomial of size at most $k$, and a \emph{$k$-rule} is a rule whose body is a $k$-monomial. 
A rule $R$ is \emph{fired} on an instance $\vec z \in \{0,1\}^d$ 
if its body is true for $\vec z$, that is, $v_j = z_j$ for each literal $X_j = v_j$ occurring in $\Body{R}$. 
In that case, the label assigned to $\vec z$ is $\Head{R}$.
On the other hand, if the body of $R$ is false for $\vec z$, then the corresponding label is $1 - \Head{R}$. 
A Boolean hypothesis $r: \{0,1\}^d \rightarrow \{0,1\}$ is \emph{representable} by a rule $R$ 
if the following condition holds: for any $\vec z \in \{0,1\}^d$, $r(\vec z) = \Head{R}$ if and only if $\Body{R}$ is true for $\vec z$.

Based on these standard notions, any explanation $S$ for a given data instance $\vec x \in \{0,1\}^d$ and a black-box model $f: \{0,1\}^d \rightarrow \{0,1\}$ 
can be described by an \textsc{if-then} rule where the body is the set of literals $X_j = x_j$ such that $j \in S$, and the head is the prediction $f(\vec x)$. 
By construction, this rule is fired on any instance $\vec z \in \{0,1\}^d$ if and only if $\vec x_S$ covers $\vec z$, 
that is, $\vec x_S = \vec z_S$. Thus, the \textsc{if-then} rules of interest in this framework are those whose body is true for $\vec x$ and whose head is $f(\vec x)$.

From this perspective, let $\mathcal R_{f,\vec x}$ denote the class of all Boolean hypotheses $r: \{0,1\}^d \rightarrow \{0,1\}$ 
representable by a rule $R$, where $\Body{R}$ is a monomial over the set of literals $\{X_j = x_j\}_{j = 1}^d$ and $\Head{R} = f(\vec x)$.
It is important to note that any hypothesis $r \in \mathcal R_{f,\vec x}$ agrees with $f$ on $\vec x$, 
that is, $r(\vec x) = f(\vec x)$. The \emph{(zero-one) loss} of $r$ with respect to $f$ is defined as
\begin{align}
\label{eq.true.loss}
    L_f(r) = \mathbb P_{\vec z \sim \mathcal D} [r(\vec z) \neq f(\vec z)]  
\end{align}
where $\vec z$ is again chosen randomly according to the distribution $\mathcal D$.

By $\mathcal R_{f,\vec x,k}$ we denote the restriction of $\mathcal R_{f,\vec x}$ to 
the hypotheses representable by $k$-rules.
The following result shows that algorithms for learning \textsc{if-then} rules 
in $\mathcal R_{f,\vec x, k}$ can be used to explain predictions.

\begin{lemma}
\label{lemma.learning}
Let $f: \{0,1\}^d \rightarrow \{0,1\}$ be a black-box model, and $\mathcal D$ be a probability distribution over $\{0,1\}^d$.
Suppose there exists an algorithm with the following property: 
given $k \in \mathbb N$, and an instance $\vec x \in \{0,1\}^d$, 
the algorithm returns a rule representing a hypothesis $r \in \mathcal R_{f,\vec x,k}$.
Then, the rule representing $r$ determines a $k$-explanation $S$ for $\vec x$ and $f$ satisfying
\begin{align}
    \label{eq.lemma.learning.0}
    \mathbb E_{\vec x \sim \mathcal D}\left[\epsilon_{f,\vec x}(S)\right] \leq  L_f(r)
\end{align}
\end{lemma}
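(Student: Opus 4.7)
The plan is to argue pointwise in $\vec x$, exploiting the rigid structure of any $r \in \mathcal R_{f,\vec x,k}$: the body of $r$ fires precisely on $E_{\vec x} := \{\vec z \in \{0,1\}^d : \vec z_S = \vec x_S\}$, and on that event $r$ predicts $\Head{R} = f(\vec x)$. Consequently, on $\vec z \in E_{\vec x}$ the error event $\{r(\vec z) \neq f(\vec z)\}$ coincides with $\{f(\vec x) \neq f(\vec z)\}$.

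First I would fix $\vec x$ and use the trivial domination $\mathbbm 1[r(\vec z) \neq f(\vec z)] \geq \mathbbm 1[r(\vec z) \neq f(\vec z),\, \vec z \in E_{\vec x}]$; marginalizing over $\vec z \sim \mathcal D$ and substituting via the identity above gives the pointwise bound
\[
\mathbb P_{\vec z}\bigl[f(\vec z) \neq f(\vec x),\, \vec z \in E_{\vec x}\bigr] \;\leq\; L_f(r).
\]
By the definition of conditional probability in~(\ref{def.precision}), the left-hand side rewrites as $\mathbb P_{\vec z}[\vec z \in E_{\vec x}] \cdot \epsilon_{f,\vec x}(S)$, yielding the sharper pointwise statement $\mathbb P_{\vec z}[\vec z \in E_{\vec x}] \cdot \epsilon_{f,\vec x}(S) \leq L_f(r)$.

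Second I would take expectation over $\vec x \sim \mathcal D$ on both sides. Since $r$ and $S$ are both produced by the algorithm as functions of $\vec x$, the two sides are random and the expectation is well-defined. Recognizing $\mathbb E_{\vec x}[\mathbb P_{\vec z}[\vec z \in E_{\vec x}] \cdot \epsilon_{f,\vec x}(S)]$ as the joint misprediction probability $\mathbb P_{(\vec x, \vec z)\sim \mathcal D \times \mathcal D}[f(\vec z) \neq f(\vec x),\, \vec z_{S_{\vec x}} = \vec x_{S_{\vec x}}]$ should then deliver~(\ref{eq.lemma.learning.0}).

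The main obstacle will be reconciling the covering weight $\mathbb P_{\vec z}[\vec z \in E_{\vec x}]$ that appears in my pointwise bound with the plain conditional $\epsilon_{f,\vec x}(S)$ inside the expectation on the claim's left-hand side, since the weight can be much smaller than $1$ when the explanation covers a narrow slice. I expect this gap to close under the natural reading of $\mathbb E_{\vec x}[\epsilon_{f,\vec x}(S)]$ as the aggregate precision error over the product distribution $\mathcal D \times \mathcal D$, so that the covering weight is absorbed by the outer expectation; otherwise a symmetry argument between $\vec x$ and $\vec z$ i.i.d.~from $\mathcal D$ should use the rule's freedom outside of $E_{\vec x}$ to soak up any residual slack.
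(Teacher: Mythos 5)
Your pointwise step is correct but it proves a strictly weaker inequality than the one needed, and the way you propose to close the gap does not work. What you obtain for each fixed $\vec x$ is $\mathbb P_{\vec z \sim \mathcal D}[\vec z_S = \vec x_S]\cdot \epsilon_{f,\vec x}(S) \leq L_f(r)$, and taking $\mathbb E_{\vec x}$ turns the left-hand side into the \emph{joint} probability $\mathbb P_{(\vec x,\vec z)\sim \mathcal D\times\mathcal D}[f(\vec z)\neq f(\vec x),\ \vec z_S = \vec x_S]$. But the lemma's left-hand side is $\mathbb E_{\vec x}\left[\epsilon_{f,\vec x}(S)\right]$, the expectation of the \emph{conditional} probability of Equation (\ref{def.precision}); the joint probability equals the expected conditional multiplied by a covering weight $\leq 1$, so it only \emph{lower}-bounds $\mathbb E_{\vec x}[\epsilon_{f,\vec x}(S)]$ --- the inequality you need goes the other way, and the weight $\mathbb P_{\vec z}[\vec z_S=\vec x_S]$ can be exponentially small, so it is not ``absorbed.'' Reading $\mathbb E_{\vec x}[\epsilon_{f,\vec x}(S)]$ as the aggregate error over the product distribution is a reinterpretation of the statement, not a proof of it, and the appeal to ``symmetry'' between $\vec x$ and $\vec z$ is not an argument as stated.

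The paper's proof stays at the level of conditional probabilities and that is exactly what avoids the covering weight. By a union bound, $\epsilon_{f,\vec x}(S) \leq \mathbb P_{\vec z}[f(\vec z)\neq r(\vec z)\mid \vec z_S=\vec x_S] + \mathbb P_{\vec z}[r(\vec z)\neq r(\vec x)\mid \vec z_S=\vec x_S] + \mathbb P_{\vec z}[r(\vec x)\neq f(\vec x)\mid \vec z_S=\vec x_S]$; the last two terms vanish because $r$ depends only on the features in $S$ and because $r(\vec x)=f(\vec x)$ for every $r\in\mathcal R_{f,\vec x,k}$ (these are the same structural facts underlying your identification of the error event on $E_{\vec x}$). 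The decisive step you are missing is then the law of total probability: since the event $\{f(\vec z)\neq r(\vec z)\}$ depends only on $\vec z$, and since $\vec x_S$ and $\vec z_S$ have the same marginal under $\mathcal D$, averaging the conditional $\mathbb P_{\vec z}[f(\vec z)\neq r(\vec z)\mid \vec z_S=\vec x_S]$ over $\vec x\sim\mathcal D$ sums the conditional probabilities over the cells $\{\vec z_S=\vec u\}$ with exactly the cell weights, giving $\mathbb P_{\vec z}[f(\vec z)\neq r(\vec z)] = L_f(r)$ with no loss. Your route bounds the conditioned event by $\{f(\vec z)\neq f(\vec x)\}$, which depends on $\vec x$, so no such telescoping is available, and the claim is in fact false pointwise in $\vec x$ (as the paper's note after Theorem~\ref{theorem.main} emphasizes), so any fix must exploit the expectation in the way just described.
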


Before we prove Lemma~\ref{lemma.learning}, let us discuss a little what it says. 
We would like to learn explanations for inputs $\vec x$ with little precision error $\epsilon_{f,\vec x}(S)$. 
Since we have made no assumption whatsoever on $f$ which we treat as a black box, it is hard to see how to come up with a rule that does so. 
An easier task might be learning a representation of $f$ as a rule in $\mathcal R_{f,\vec x,k}$ in the hope that this rule could also serve as an explanation. 
What Lemma~\ref{lemma.learning} says is that this hope is well founded: in expectation, the precision error of the learned rule is not worse than its loss. 
Applying a standard Markov inequality argument, this can be lifted to show that for a good fraction of the inputs $\vec x$, 
the learned rule is a good explanation if the loss is small. So in the rest of this section, 
we will then show how to learn rules that approximate the function $f$ well and use these rules as explanations.

\begin{proof} 
    As $S$ is defined by the set of indices $j$ such that $(X_j = x_j)$ is a literal of $\Body{R}$, 
    it is a $k$-explanation for $f$ and $\vec x$ because $R$ is a $k$-rule. 
    Furthermore, we know by union bound that 
    \begin{align}  
        \mathbb P_{\vec z \sim \mathcal D} [f(\vec z) \neq f(\vec x) \mid \vec z_S = \vec x_S] 
        & \leq \mathbb P_{\vec z \sim \mathcal D} [f(\vec z) \neq r(\vec z) \mid \vec z_S = \vec x_S] \label{eq.lemma.learning.1} \\
        & + \mathbb P_{\vec z \sim \mathcal D} [r(\vec z) \neq r(\vec x) \mid \vec z_S = \vec x_S] \label{eq.lemma.learning.2} \\
        & + \mathbb P_{\vec z \sim \mathcal D} [r(\vec x) \neq f(\vec x) \mid \vec z_S = \vec x_S] \label{eq.lemma.learning.3} 
    \end{align}
    Consider the probability (\ref{eq.lemma.learning.2}). 
    By construction, the prediction made by $R$ depends only on the indices in $S$. 
    So $r(\vec z) = r(\vec x)$ whenever $\vec z_S = \vec x_S$, and hence 
    \begin{align*}
        P_{\vec z \sim \mathcal D} [r(\vec z) \neq r(\vec x) \mid \vec z_S = \vec x_S] &= 0
    \end{align*}
    Now, observe that the event in (\ref{eq.lemma.learning.3}) is independent of the choice of $\vec z$. 
    Since by definition all hypotheses in $\mathcal R_{f,\vec x,k}$ agree with $f$ on $\vec x$, we have $r(\vec x) = f(\vec x)$, so 
    \begin{align*}
        \mathbb P_{\vec z \sim \mathcal D} [r(\vec x) \neq f(\vec x) \mid \vec z_S = \vec x_S] &=  0
    \end{align*}
    Consider finally the right-hand side of (\ref{eq.lemma.learning.1}). Using the fact that $\vec x$ is chosen randomly according to $\mathcal D$, we have 
    \begin{align*}
        \mathbb E_{\vec x \sim \mathcal D} \left[\mathbb P_{\vec z \sim \mathcal D} [f(\vec z) \neq r(\vec z) \mid \vec z_S = \vec x_S]\right] 
        &= \mathbb P_{\vec z \sim \mathcal D} [f(\vec z) \neq r(\vec z)]     
    \end{align*}
    Thus, taking expectations on both sides of (\ref{eq.lemma.learning.1}) and plugging in (\ref{def.precision}) and (\ref{eq.true.loss}), 
    it follows that
    \begin{align*}
        \mathbb E_{\vec x \sim \mathcal D}\left[\epsilon_{f,\vec x}(S)\right] \leq  L_f(r)
    \end{align*}
    which concludes the proof. 
\end{proof}

\subsection{Learning Rules via Constraint Optimization}
\label{sec.cop}

In light of Lemma~\ref{lemma.learning}, our framework's objective is to learn a hypothesis in $\mathcal R_{f,\vec x,k}$.
Using the fact that each hypothesis in this class is representable by a rule whose body is included in the set of literals $\{X_j = x_j\}_{j = 1}^d$,
we will show that this objective is equivalent to the problem of \emph{(proper) learning monotone $k$-monomials}. 
Monomials are known to be efficiently learnable in the \emph{realizable} case where the target function $f$ can be represented by a monomial \cite{Valiant.ACM.1984}. 
However, in the \emph{agnostic} case, where $f$ is an arbitrary classifier, learning monotone monomials is \NP-hard \cite{Kearns.ML.1994} 
and hard to approximate \cite{Feldman.SIAM.2009}. Nonetheless, monotone monomials are one of the 
simplest concept classes in machine learning, endowed with a low sample complexity. 
Hence, by framing the learning task as a constraint optimization problem, a constraint solver should be able to efficiently 
learn short rules in $\mathcal R_{f,\vec x,k}$ within a reasonable amount of time.

In formal terms, the learning problem we wish to solve via constraint optimization is the following: 
given an instance $\vec x \in \{0,1\}^d$ to explain, a size limit $k$, and a set of samples $\{(\vec z_i, f(\vec z_i))\}_{i=1}^m$ 
generated from the example oracle $\EX(f,\mathcal D)$, find a hypothesis $r \in \mathcal R_{f,\vec x,k}$
that minimizes the empirical loss
\begin{align}
\label{eq.emp.loss}
    \hat{L}_f(r) \defeq \frac{1}{m} \sum_{i=1}^m \mathbbm{1}[r(\vec z_i) \neq f(\vec z_i)].
\end{align}

We first demonstrate that this task is equivalent to learning a monotone $k$-monomial.  
To this end, recall that a monotone monomial is a conjunction $C$ of variables over $\{X_j\}_{j=1}^d$.
A hypothesis $c: \{0,1\}^d \rightarrow \{0,1\}$ is representable by $C$ 
if for any vector $\vec u \in \{0,1\}^d$, we have $c(\vec u) = 1$ if and only if $u_j = 1$ 
for each $j \in [d]$. Let $\mathcal C$ denote the class of all hypotheses representable 
by monotone monomials, and let $\mathcal C_k$ denote the restriction of $\mathcal C$ 
to the hypotheses representable by monotone $k$-monomials.

Now, for two Boolean vectors $\vec a,\vec b \in \{0,1\}^d$, let $\vec a \otimes \vec b$ 
denote the bitwise equivalence between $\vec a$ and $\vec b$, 
given by $(a_i \otimes b_i) = \mathbbm 1[a_i = b_i]$ for each $i \in [d]$. 
Based on this operation, we map each sample $(\vec z_i, f(\vec z_i))$ into an example  
$(\vec u_i, v_i)$ where $\vec u_i = \vec z_i \otimes \vec x$ 
and $v_i = f(\vec z_i) \otimes f(\vec x)$. With this transformation in hand, consider 
the task of finding a function $c \in \mathcal C_k$ described by a $k$-monotone monomial that minimizes
\begin{align}
\label{eq.mon.loss}    
    \hat{L}(c) \defeq \frac{1}{m} \sum_{i=1}^m \mathbbm{1}[c(\vec u_i) \neq v_i]
\end{align}

\begin{lemma}
    \label{lemma.equivalence} 
    For a black-box model $f$ and an instance $\vec x$, let $\{(\vec z_i, f(\vec z_i))\}_{i=1}^m$ be a set of samples 
    generated by $\EX(f,\mathcal D)$, and let $\{(\vec u_i, v_i)\}_{i=1}^m$ be the corresponding training set defined as above.
    Then, for any $k \in \mathbb N$
    \begin{align*}
        \min_{c \in \mathcal C_k} \hat{L}(c) = \min_{r \in \mathcal R_{f,\vec x,k}} \hat{L}_f(r)
    \end{align*}
    Furthermore, given a monomial representation of a minimizer of $\hat{L}(\cdot)$ in $\mathcal C_k$,
    a rule representation of a minimizer of $\hat{L}_f(\cdot)$ in $\mathcal R_{f,\vec x,k}$ can be found in $\mathcal O(k)$ time.
\end{lemma}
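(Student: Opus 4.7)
The plan is to exhibit a loss-preserving bijection between $\mathcal{R}_{f,\vec{x},k}$ and $\mathcal{C}_k$ that matches the original training set $\{(\vec{z}_i, f(\vec{z}_i))\}$ with its transformed counterpart $\{(\vec{u}_i, v_i)\}$. Concretely, to each $k$-rule $R$ with body $\bigwedge_{j \in S}(X_j = x_j)$ and head $f(\vec{x})$ I would associate the monotone $k$-monomial $\bigwedge_{j \in S} X_j$; conversely, each monotone $k$-monomial on a set $S \subseteq [d]$ lifts to exactly such a rule by substituting the literal values $x_j$ and fixing the head to $f(\vec{x})$. This map is clearly a bijection on the respective representations, and either direction only needs to read off $S$ and emit $|S| \le k$ literals, so it runs in $\mathcal{O}(k)$ time. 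This will also settle the constructive ``$\mathcal{O}(k)$'' part of the statement.

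The heart of the argument is the pointwise loss identity: for every sample index $i$,
\[ \mathbbm{1}[r(\vec{z}_i) \neq f(\vec{z}_i)] \;=\; \mathbbm{1}[c(\vec{u}_i) \neq v_i], \]
where $r$ and $c$ denote the hypotheses represented by a matched rule--monomial pair. The key observation is that $\Body{R}$ fires on $\vec{z}_i$ precisely when $(z_i)_j = x_j$ for all $j \in S$, which by the definition $\vec{u}_i = \vec{z}_i \otimes \vec{x}$ is equivalent to $(u_i)_j = 1$ for all $j \in S$, i.e.\ to $c(\vec{u}_i) = 1$. Combined with $v_i = 1 \Leftrightarrow f(\vec{z}_i) = f(\vec{x})$ and the fact that $r(\vec{z}_i)$ equals $f(\vec{x})$ when $R$ fires and $1 - f(\vec{x})$ otherwise, I would finish by a four-case check over $(c(\vec{u}_i), v_i) \in \{0,1\}^2$ and verify that $r(\vec{z}_i) = f(\vec{z}_i)$ holds iff $c(\vec{u}_i) = v_i$ in each case.

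Summing this pointwise identity over $i = 1,\dots,m$ yields $\hat{L}_f(r) = \hat{L}(c)$ for every matched pair; since the correspondence is a bijection between $\mathcal{R}_{f,\vec{x},k}$ and $\mathcal{C}_k$, taking minima on both sides gives the claimed equality of optima. The only pitfall I anticipate is the four-way case analysis itself: because $r$ outputs $1 - f(\vec{x})$ (rather than abstaining) when its body is false, one must carefully verify that the two ``non-firing'' combinations (where $c(\vec{u}_i) = 0$) agree with the transformed labels under both possible values of $v_i$. This is not conceptually deep, but it is the step where one must be careful that the $\otimes$ encoding lines up on all four cases, so that is where I would focus attention.
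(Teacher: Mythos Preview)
Your proposal is correct and follows essentially the same approach as the paper: both set up the bijection between $\mathcal{C}_k$ and $\mathcal{R}_{f,\vec x,k}$ via the common index set $S$, observe that the map costs $\mathcal{O}(k)$, and then establish the pointwise equivalence $c(\vec u_i)=v_i \Leftrightarrow r(\vec z_i)=f(\vec z_i)$ to conclude $\hat L(c)=\hat L_f(r)$ and hence equality of the minima. Your explicit four-case check just unpacks what the paper asserts in one line using the involutive property of $\otimes$.
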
 

\begin{proof}
    Consider the mapping $C \mapsto R$, where $C$ is a monotone $k$-monomial, 
    and $R$ is a rule whose body is the set of literals $X_j = x_j$ such that $X_j$ is a variable in $C$,
    and whose head is the label $f(\vec x)$.
    Clearly, $R$ is a $k$-rule that can be constructed in $\mathcal O(k)$ time.
    Consider also the mapping $C \mapsto c$ which associates a hypothesis $c \in \mathcal C_k$ to a monotone $k$-monomial.
    Observe that $c$ is uniquely determined from $C$.
    The same holds for the mapping $R \mapsto r$ that associates a hypothesis $r \in \mathcal R_{f,\vec x,k}$ to a $k$-rule $R$.
    Therefore, both $C \mapsto c$ and $R \mapsto r$ are bijective. 
    This, together with the fact that $C \mapsto R$ is also bijective, implies a one-to-one correspondence between $\mathcal C_k$ and $\mathcal R_{f,\vec x,k}$.
    
    Now, let $C$ be a monotone $k$-monomial, and let $c \in \mathcal C_k$ and $r \in R_{f,\vec x,k}$ be the hypotheses obtained from the mappings 
    $C \mapsto c$, $C \mapsto R$, and $R \mapsto r$. Any sample $(\vec u_i, v_i)$ can be mapped to a sample $(\vec z_i, f(\vec z_i))$ 
    using $\vec z_i = \vec u_i \otimes \vec x$ and $f(\vec z_i) = v_i \otimes f(\vec x)$.
    Based on this bijective transformation, it follows that $c(\vec u_i) = v_i$ if and only if $r(\vec z_i) = f(\vec z_i)$.      
    Therefore, $\hat{L}(c) = \hat{L}_f(r)$, and since the feasible sets $\mathcal C_k$ and $\mathcal R_{f,\vec x,k}$ are in one-to-one correspondence,
    the optimal values for $\hat{L}(\cdot)$ and $\hat{L}_f(\cdot)$ are equal.  
\end{proof}

Based on these considerations, we are now in a position to provide a constraint encoding of our learning problem.
Recall that a \emph{Constraint Optimization Problem} (COP) consists of a set of variables, each associated with an integer domain, 
a set of constraints, and an objective function to minimize. As stated by Lemma~\ref{lemma.equivalence}, we can focus on the task 
of minimizing (\ref{eq.mon.loss}), since we can derive in linear time a representation of a minimizer of (\ref{eq.emp.loss}).

\begin{table}[t!]
    \begin{center}    
        \begin{tabularx}{\dimexpr\textwidth}{p{0.6cm} p{3.2cm}  >{\raggedright\arraybackslash}X}
            \toprule
            (i)   & Feature variables:      & $\{X_j\}_{j=1}^d$ with domain $\{0,1\}$\\ 
            (ii)  & Channeling variables:   & $\{U_i\}_{i=1}^m$ with domain $\{0,1\}$\\ 
            (iii) & Cardinality variable:   & $K$ with domain $\{0,\cdots,k\}$\\
            \midrule 
            (iv)  & Cardinality constraint: & $\left(\sum_{j=1}^d X_j \leq K\right)$\\
            (v)   & Positive constraints:   & $U_i \leftrightarrow \mathbbm 1\left[\sum_{j=1}^d X_{j} z_{ij} < K\right]$, $\forall i \in [m]$ s.t. $v_i = 1$  \\
            (vi)  & Negative constraints:   & $U_i \leftrightarrow \mathbbm 1\left[\sum_{j=1}^d X_{j} z_{ij} \geq K\right]$, $\forall i \in [m]$ s.t. $v_i = 0$  \\
            \midrule 
            (vii) & Objective function:     & $\min \left(\sum_{i=1}^m U_i\right)$\\
            \bottomrule
        \end{tabularx}   
    \end{center}
    \caption{COP for learning monotone $k$-monomials.}
    \label{table.cop}
\end{table}

The COP associated with this optimization task is described in Table~\ref{table.cop}. 
For each feature $j \in [d]$, we use one Boolean variable $X_j$, and for each sample index $i \in [m]$, 
we use one Boolean variable $U_i$.
Additionally, we use an integer variable $K$ with domain $\{0,\cdots,k\}$. 
To restrict the size of the monotone rule, we use a cardinality constraint (iv). 
We also use two series of channeling constraints (v - vi) to encode the loss of the rule on each example.
Based on these channeling constraints, the empirical loss of the rule is encoded in the objective function (vii).
The set of variables $X_j$ such that $X_j = 1$ in the minimizer of (vii) returned by the constraint solver 
defines the monotone $k$-monomial $C$, and the set of indices of these variables yields the explanation $S$.

\subsection{From Learning to Explanations}

We can now combine the results obtained so far. 
Informally, Lemma~\ref{lemma.learning} tells us that any classifier $r \in \mathcal R_{f,\vec x,k}$ 
achieving a small loss $L_f(r)$ can provide a $k$-explanation $S$ for $\vec x$ and $f$ with a small expected precision error.
In addition, Lemma~\ref{lemma.equivalence} shows us that the task of learning $r$ is equivalent to learning a monotone $k$-monomial, 
which can be achieved through constraint optimization. 
The next theorem establishes a connection between these results using standard tools in learning theory.

\begin{theorem}
    \label{theorem.main}
    Let $\mathcal D$ be a probability distribution over $\{0,1\}^d$ and $f: \{0,1\}^d \rightarrow \{0,1\}$ be a black-box model accessed through an example oracle $\EX(f,\mathcal D)$.
    Then, for any data instance $\vec x$ drawn at random from $\mathcal D$,
    any $k \in \mathbb N$, and any $\varepsilon,\delta \in (0,1)$, a $k$-explanation $S$ for $\vec x$ and $f$ 
    satisfying, with probability at least $1 - \delta$,  
    \begin{align}
        \label{eq.theorem}
        \mathbb E_{\vec x \sim \mathcal D}\left[\epsilon_{f,\vec x}(S)\right] \leq \min_{r \in \mathcal R_{f,\vec x,k}} L_f(r)  + \varepsilon   
    \end{align}
    can be found using $\mathcal O\left(\displaystyle\frac{k \ln(d) + \ln(1/\delta)}{\varepsilon^2}\right)$ calls to $\EX(f,\mathcal D)$.
\end{theorem}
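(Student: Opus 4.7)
The plan is to turn the COP of Section~\ref{sec.cop} into an agnostic PAC learner for $\mathcal{R}_{f,\vec{x},k}$ and then invoke Lemma~\ref{lemma.learning}. Concretely, the algorithm draws a sample of size $m = \Theta\bigl((k\ln d + \ln(1/\delta))/\varepsilon^2\bigr)$ from $\EX(f,\mathcal{D})$, transforms it as prescribed by Lemma~\ref{lemma.equivalence}, and hands it to the constraint solver. By Lemma~\ref{lemma.equivalence}, the rule returned by the solver corresponds to an empirical risk minimizer $\hat{r} \in \argmin_{r \in \mathcal{R}_{f,\vec{x},k}} \hat{L}_f(r)$, and its body yields the candidate explanation $S$.

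The core step is a standard uniform-convergence bound. The class $\mathcal{R}_{f,\vec{x},k}$ is finite with $|\mathcal{R}_{f,\vec{x},k}| \leq \sum_{j=0}^{k} \binom{d}{j}$, hence $\log|\mathcal{R}_{f,\vec{x},k}| = O(k \log d)$. Each empirical loss $\hat{L}_f(r)$ is an average of $m$ i.i.d.\ $\{0,1\}$-valued indicators, so Hoeffding's inequality combined with a union bound over $\mathcal{R}_{f,\vec{x},k}$ ensures that, for the sample size stated above, the event
\[ |\hat{L}_f(r) - L_f(r)| \leq \varepsilon/2 \quad \text{for all } r \in \mathcal{R}_{f,\vec{x},k} \]
holds with probability at least $1-\delta$. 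On this event, the textbook ERM chain
\[ L_f(\hat{r}) \;\leq\; \hat{L}_f(\hat{r}) + \tfrac{\varepsilon}{2} \;\leq\; \hat{L}_f(r^\star) + \tfrac{\varepsilon}{2} \;\leq\; L_f(r^\star) + \varepsilon, \]
where $r^\star$ is a true-risk minimizer in $\mathcal{R}_{f,\vec{x},k}$, supplies the excess-loss bound. Feeding $\hat{r}$ into Lemma~\ref{lemma.learning} then converts this loss bound into the precision-error guarantee~(\ref{eq.theorem}).

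The one subtlety to be careful with is that $\vec{x}$ is itself random and the class $\mathcal{R}_{f,\vec{x},k}$ depends on $\vec{x}$. Since the samples used by the solver are drawn independently of $\vec{x}$, conditioning on $\vec{x}$ keeps them i.i.d., so the uniform-convergence estimate holds pointwise in $\vec{x}$ and interacts cleanly with the outer $\mathbb{E}_{\vec{x}\sim\mathcal{D}}[\cdot]$ coming from Lemma~\ref{lemma.learning}. I expect this bookkeeping of randomness to be the only real obstacle, since beyond it the argument is a routine composition of Hoeffding, a union bound over $d^{O(k)}$ candidate rules, and the two preceding lemmas.
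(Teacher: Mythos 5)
Your proposal follows essentially the same route as the paper: transform the samples, let the constraint solver perform ERM over the class (via Lemma~\ref{lemma.equivalence}), apply uniform convergence for the finite class $\mathcal R_{f,\vec x,k}$ of size $d^{O(k)}$, and plug the resulting excess-loss bound into Lemma~\ref{lemma.learning}. The only cosmetic difference is that you derive the uniform-convergence step directly from Hoeffding plus a union bound, whereas the paper cites Corollary~4.6 of Shalev-Shwartz and Ben-David, which encapsulates exactly that argument.
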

\begin{proof}
    Let $S$ be a $k$-explanation returned by the constraint solver for the COP described in Table~\ref{table.cop}.
    Let $c$ be the hypothesis in $\mathcal C_k$ represented by the monotone $k$-monomial $C = \{X_j: j \in S\}$.
    Since the solver returns an optimal solution for the COP, we know that $c$ is a minimizer of the empirical loss $\hat L(\cdot)$ over $\mathcal C_k$.
    Furthermore, by Lemma~\ref{lemma.equivalence}, $C$ can be converted in linear time into a $k$-rule representation of a minimiser $r$ of the empirical loss 
    $\hat L_f(\cdot)$. Now, using the fact that $\size{\mathcal R_{f,\vec x,k}} \leq d^k$, if the number $m$ of calls to $\EX(f,\mathcal D)$ satisfies 
    \begin{align*}
        m \geq \frac{2}{\varepsilon^2} \left[ k \ln(d) + \ln \Bigl(\frac{2}{\delta}\Bigr) \right]
    \end{align*}
    then, by application of Corollary 4.6 in \cite{ShalevShwartz.Book.2014}, we have
    \begin{align*}
    L_f(r) \leq  \min_{r \in \mathcal R_{f,\vec x,k}} L_f(r)  + \varepsilon
    \end{align*}
    with probability $1 - \delta$. Plugging this bound into (\ref{eq.lemma.learning.0}) yields the result. 
\end{proof}

\paragraph{Note.} It is important to keep in mind that the bound (\ref{eq.theorem}) is only true in expectation. 
It would be comforting to know that the same bound holds for \emph{every} possible input $\vec x$. 
However, this is not the case. The issue is that the error made by a rule in approximating 
a function $f$ may be amplified when considering the precision error $\epsilon_{f,\vec x}(S)$,
which is a probability that is conditioned on a restricted subspace of instances.
In a nutshell, Theorem~\ref{theorem.main} circumvents the hardness results in \cite{Waeldchen.JAIR.2021},
because our constraint-based approach is expected to provide good approximations 
only for a fraction $(1 - \delta)$ of random instances $\vec x$ to explain.

\section{Experiments}

In order to validate the effectiveness of our approach, we have considered various model-agnostic explanation tasks.
The code was written using the \texttt{Python} language. 
Our experiments have been conducted on a Quad-core Intel XEON X5550 with 32GB of memory. 

\subsection{Experimental Setup}

We conducted experiments on $B = 25$ tabular datasets (or benchmarks) from the \texttt{OpenML} repository.
All datasets are classification tasks including both numerical and categorical attributes. 
To convert these raw attributes into interpretable binary features, 
we used a \texttt{Scikit-Learn} implementation of the $K$-bins discretization method with $3$ bins per attribute. 
The resulting interpretable instances have an average dimension $d$ of $60$, ranging from $12$ to $352$.  

For each benchmark $b \in [B]$, an explanation task is defined by a tuple $(f,\vec x,k)$, 
where $f$ represents a black-box model implemented by a neural network. 
For our experiments, we utilized a \texttt{Scikit-Learn} implementation of the 
multi-layer perceptron classifier, using default parameters. 
As usual, we trained $f$ on the training set of the benchmark $b$, 
and evaluated its accuracy on the test set of $b$. 
Each instance $\vec x$ to explain was randomly selected 
from the test set of $b$. 
For multi-label classification tasks, the label $f(\vec x)$ of $\vec x$ was set to $1$, 
and for any instance $\vec z$ such that $f(\vec z) \neq f(\vec x)$, the label $f(\vec z)$ of $\vec z$ 
was set to $0$. Finally, we used $k \in \{1,\cdots,7\}$ for the size limits.

The empirical precision error $\hat{\epsilon}_{f,\vec x}(S)$
of an explanation $S$ for $\vec x$ and $f$ was measured using 
$m = 1000$ labeled samples $(\vec z,f(\vec z))$, where 
$\vec z$ was generated uniformly at random over $\{0,1\}^d$.
The performance of explanation algorithms on a benchmark $b$ 
was measured by selecting $100$ random instances $\vec x$, 
and averaging the empirical precision error $\hat{\epsilon}_{f,\vec x}(S)$.

In our experiments, we explored two different encodings of the constraint optimization problem, described in Section~\ref{sec.cop}. 
The first encoding, called \textsc{Cop}, includes all constraints listed in Table~\ref{table.cop}. 
Thus, \textsc{Cop} uses the channeling constraints (v - vi) for both positive and negative examples in the sample set. 
The second encoding, named \textsc{Sat}, excludes the positive channeling constraints (v). 
Although \textsc{Sat} is no longer guaranteed to return an empirical minimizer of (\ref{eq.true.loss}), 
it is smaller than \textsc{Cop} because it only considers negative examples. 
Furthermore, the constraints (vi) can be broken down into propositional clauses, 
which speeds up the search for optimal solutions. 
We used the \texttt{CP-SAT} solver from \texttt{OR-Tools} \cite{Perron.2024}, 
with a 60-second timeout per explanation task. 
When the timeout is reached, the solver returns the current best (sub-optimal) solution.

Both constraint encodings were compared with the state-of-the-art \textsc{Anchors} approach.
As mentioned above, \textsc{Anchors} is a beam search method that starts from an empty rule and then adds literals iteratively, estimating their quality using a stochastic bandit algorithm. 
For our experiments, we followed the default parameters recommended in \cite{Ribeiro.AAAI.2018}; we set the number of beams to $10$, and the tolerance and precision of the bandit algorithm to $99\%$ and $99.5\%$, respectively.

\subsection{Experimental Results}

\setlength{\tabcolsep}{6pt}
\begin{table*}[t]
    \begin{center}
    \begin{adjustbox}{width=\linewidth,center}  
    \begin{tabular}{lr r rrr r rr}
        \toprule
        \multicolumn{2}{c}{Benchmark} & & \multicolumn{3}{c}{$\hat{\epsilon}_{f,\vec x}(S)$} & & \multicolumn{2}{c}{Time (s)} \\
        \cmidrule{1-2} \cmidrule{4-6} \cmidrule{8-9}
        Name & ID &&                          \textsc{Cop}        & \textsc{Sat}      & \textsc{Anchors}  && \textsc{Cop} & \textsc{Sat} \\
        \midrule
        \emph{ada prior}	   & $1037$  &&	$\mathbf{0.14 (\pm 0.20)}$ & $0.19 (\pm 0.24)$ & $0.43 (\pm 0.11)$ &&  $60.06$ & $12.02$ \\			
        \emph{adult census}	   & $1119$  &&	$\mathbf{0.19 (\pm 0.21)}$ & $0.24 (\pm 0.26)$ & $0.44 (\pm 0.12)$ &&  $60.05$ & $18.02$ \\
        \emph{aids}	           & $346$	 && $0.05 (\pm 0.05)$ & $0.04 (\pm 0.04)$ & $0.59 (\pm 0.41)$          &&  $01.73$ & $00.06$ \\	
        \emph{australian}      & $40981$ && $\mathbf{0.14 (\pm 0.12)}$ & $0.20 (\pm 0.13)$ & $0.51 (\pm 0.42)$ &&  $60.05$ & $07.13$ \\  
        \emph{bank marketing}  & $1461$	 && $\mathbf{0.05 (\pm 0.07)}$ & $0.13 (\pm 0.08)$ & $0.49 (\pm 0.11)$ &&  $60.04$ & $01.84$ \\
\emph{baseball}        & $185$   && $\mathbf{0.19 (\pm 0.15)}$ & $0.23 (\pm 0.15)$ & $0.77 (\pm 0.16)$ &&  $60.01$ & $01.96$ \\		
        \emph{calif. housing}  & $45578$ && $0.09 (\pm 0.08)	     $ & $0.10 (\pm 0.10)$ & $0.52 (\pm 0.45)$ &&  $28.97$ & $00.42$ \\		
        \emph{cardiotocography}& $1466$  && $\mathbf{0.35 (\pm 0.23)}         $ & $0.47 (\pm 0.29)$ & $0.74 (\pm 0.13)$ &&  $60.00$ & $36.33$ \\
        \emph{compas}          & $42192$ && $\mathbf{0.09 (\pm 0.07)}	     $ & $0.14 (\pm 0.09)$ & $0.59 (\pm 0.43)$ &&  $60.01$ & $00.16$ \\	
        \emph{credit approval} & $29$    &&	$\mathbf{0.12 (\pm 0.10)}$ & $0.18 (\pm 0.13)$ & $0.97 (\pm 0.30)$ &&  $60.02$ & $02.99$ \\		
        \emph{dermatology} 	   & $35$	 && $\mathbf{0.58 (\pm 0.38)}$          & $0.64 (\pm 0.32)$ & $0.75 (\pm 0.27)$ &&  $60.02$ & $31.30$ \\		
        \emph{diabetes}        & $37$    && $0.08 (\pm 0.06)$          & $0.10 (\pm 0.07)$ & $0.62 (\pm 0.37)$ &&  $31.91$ & $00.03$ \\
\emph{hepatitis}	   & $55$    && $0.14 (\pm 0.15)$          & $0.16 (\pm 0.17)$ & $0.28 (\pm 0.38)$ &&  $59.97$ & $06.75$ \\ 	
        \emph{heart disease}   & $43944$ && $0.17 (\pm 0.12)$          & $0.19 (\pm 0.10)$ & $0.32 (\pm 0.41)$ &&  $60.01$ & $01.79$ \\		
        \emph{iris}            & $61$	 && $0.02 (\pm 0.04)$          & $0.03 (\pm 0.06)$ & $0.41 (\pm 0.27)$ &&  $00.33$ & $00.06$ \\ 	
        \emph{lymph}           & $10$ 	 && $\mathbf{0.19 (\pm 0.13)}$          & $0.24 (\pm 0.14)$ & $0.32 (\pm 0.26)$ &&  $60.01$ & $05.26$ \\		
\emph{mushroom}        & $43922$ && $\mathbf{0.18 (\pm 0.09)}$          & $0.28 (\pm 0.12)$ & $0.53 (\pm 0.36)$ &&  $60.04$ & $00.85$ \\	
		\emph{nomao}           & $1486$  && $\mathbf{0.22 (\pm 0.14)}$ & $0.35 (\pm0.16)$ & $0.30 (\pm 0.31)$ && $60.06$ & $15.53$\\
        \emph{postoperative}   & $34$    && $\mathbf{0.11 (\pm 0.17)}$          & $0.17 (\pm 0.17)$ & $0.46 (\pm 0.29)$ &&  $26.92$ & $00.10$ \\
        \emph{primary tumor}   & $171$   && $0.66 (\pm 0.39)$          & $0.69 (\pm 0.35)$ & $0.96 (\pm 0.11$) && $60.01$ & $32.95$ \\ 
        \emph{shuttle}         & $172$	 && $0.09 (\pm 0.08)$          & $0.10 (\pm 0.08)$ & $0.30 (\pm 0.35)$ &&  $01.88$ & $00.04$ \\	
        \emph{stock}           & $841$   && $0.08 (\pm 0.07)$          & $0.11 (\pm 0.08)$ & $0.45 (\pm 0.40)$ &&  $54.91$ & $00.45$ \\    
        \emph{tae}  	       & $48$	 && $\mathbf{0.23 (\pm 0.21)}$ & $0.30 (\pm 0.28)$ & $0.38 (\pm 0.36)$ &&  $00.46$ & $00.07$ \\	
        \emph{veteran}         & $719$   && $0.17 (\pm 0.25)$          & $0.19 (\pm 0.29)$ & $0.30 (\pm 0.44)$ &&  $08.83$ & $00.11$ \\
        \emph{vote}            & $56$    && $0.07 (\pm 0.08)$          & $0.06 (\pm 0.06)$ & $0.30 (\pm 0.19)$ &&  $60.01$ & $00.16$ \\
\bottomrule
    \end{tabular}
    \end{adjustbox}
    \end{center}
  \caption{Experimental results on 25 benchmarks, using $m = 1000$ and $k = 5$.}
  \label{tab.results}
\end{table*}

An overview of our results on the 25 benchmarks, for $k=5$, is presented in Table~\ref{tab.results}. 
The first two columns of the table provide the name of the dataset $b$ and its corresponding \texttt{OpenML} identifier. 
The next three columns present the empirical precision errors of the explanations obtained through the encodings \textsc{Cop}, \textsc{Sat}, and the \textsc{Anchors} method. 
The last three columns show the average runtimes (in seconds) of the constraint solver for the encodings \textsc{Cop} and \textsc{Sat}. 
We did not report the runtimes of \textsc{Anchors} since they were below one second.

In light of those results, we can confidently conclude that our constraint-based approach outperforms the \textsc{Anchors} method in terms of empirical precision error across all benchmarks. 
Similar to \textsc{Cop} and \textsc{Sat}, \textsc{Anchors} generates an \emph{if-then} rule to explain the prediction $f(\vec x)$ of an input instance $\vec x$. 
However, the head of the rule produced by \textsc{Anchors} can sometimes differ from $f(\vec x)$, which explains its poor performance on datasets such as, for example, 
\emph{aids}, \emph{baseball}, \emph{credit approval} and \emph{diabetes}. 
On the other hand, the rules generated by \textsc{Cop} and \textsc{Sat} always agree with $f$ on $\vec x$ since they represent hypotheses in $\mathcal R_{f,\vec x,k}$.

Out of the 25 benchmarks, \textsc{Cop} shows a precision error of less than $20\%$ for 20 of them. 
For 9 of these benchmarks, the precision error is even less than $10\%$. 
This suggests that using a constraint-based approach to learn an optimal rule in $\mathcal R_{f,\vec x,k}$ 
can provide highly accurate explanations. 
However, for certain datasets like \emph{cardiotocography}, \emph{dermatology}, and \emph{primary tumor}, 
the performance of \textsc{Cop} is not as good, which indicates that the black-box model $f$ is sometimes too 
complex to be approximated by a rule in $\mathcal R_{f,\vec x,k}$.

\begin{figure}[t!]
    \centering
    \begin{subfigure}{0.48\textwidth}
        \includegraphics[width=\textwidth]{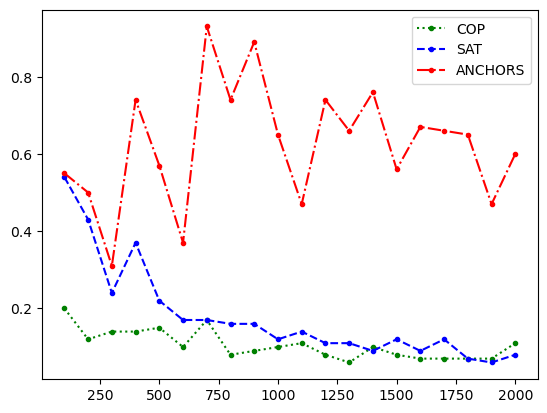}
        \caption{\emph{compas}}
        \label{fig.training.1}
    \end{subfigure}
    \hfill
    \begin{subfigure}{0.48\textwidth}
        \includegraphics[width=\textwidth]{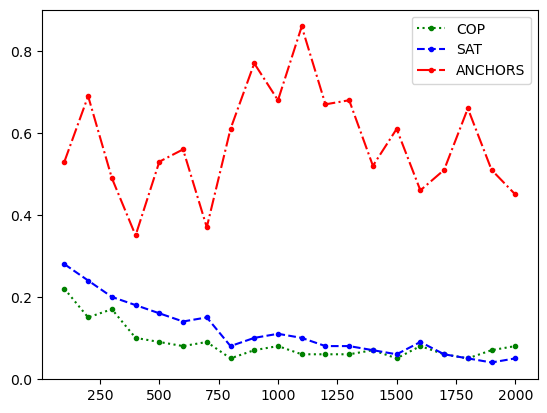}
        \caption{\emph{diabetes}}
        \label{fig.training.2}
    \end{subfigure}
    \hfill
    \begin{subfigure}{0.48\textwidth}
        \includegraphics[width=\textwidth]{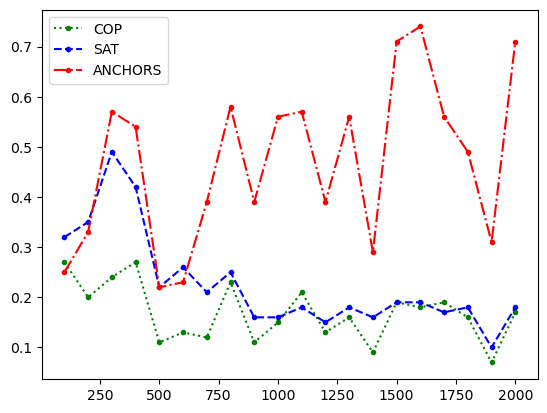}
        \caption{\emph{heart disease}}
        \label{fig.training.3}
    \end{subfigure}
    \hfill
    \begin{subfigure}{0.48\textwidth}
        \includegraphics[width=\textwidth]{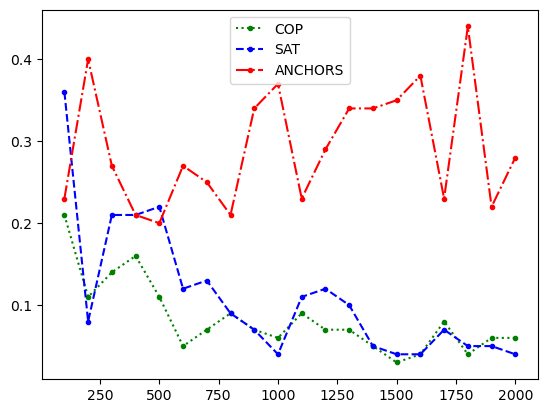}
        \caption{\emph{vote}}
        \label{fig.training.4}
    \end{subfigure}
                   
    \caption{Comparison of precision errors with increasing numbers of samples.}
    \label{fig.training}
\end{figure}

When comparing the encodings \textsc{Cop} and \textsc{Sat}, we can observe that \textsc{Cop} generally produces more precise explanations than \textsc{Sat}. 
The reason behind this is that \textsc{Cop} aims to minimize the loss $L_f(\cdot)$ over $\mathcal R_{f,\vec x,k}$, which results in more accurate explanations. 
However, \textsc{Sat} is much faster than \textsc{Cop}. 
Interestingly, \textsc{Sat} can achieve performances comparable to \textsc{Cop} in just a few seconds for 11 of the 25 benchmarks, 
and sometimes even in less than one second.

Figure~\ref{fig.training} displays the precision errors of explanations 
produced by \textsc{Cop}, \textsc{Sat}, and \textsc{Anchors} on four benchmarks. 
These precision errors are presented as a function of the number $m$ of samples used to learn the explanations. 
In particular, the plots illustrate how the performance of \textsc{Cop} and \textsc{Sat} improves as the number of samples increases, 
while \textsc{Anchors} exhibits much less stability.

Orthogonally, Figure~\ref{fig.sizes} shows the precision errors as a function of the size $k$ of explanations 
for the explainers \textsc{Cop}, \textsc{Sat}, and \textsc{Anchors}. 
The histograms indicate that both constraint encodings perform better than \textsc{Anchors} for all sizes ranging from $1$ to $7$. 
The performances of \textsc{Cop} and \textsc{Sat} are similar, except for the benchmarks \emph{compas} and \emph{heart disease}. 
In these cases, the precision of the suboptimal rules returned by \textsc{Sat} may degrade when increasing $k$, 
while the precision of the optimal rules found by \textsc{Cop} consistently improves with $k$.

\begin{figure}[t!]
    \centering
    \begin{subfigure}{0.46\textwidth}
        \includegraphics[width=\textwidth]{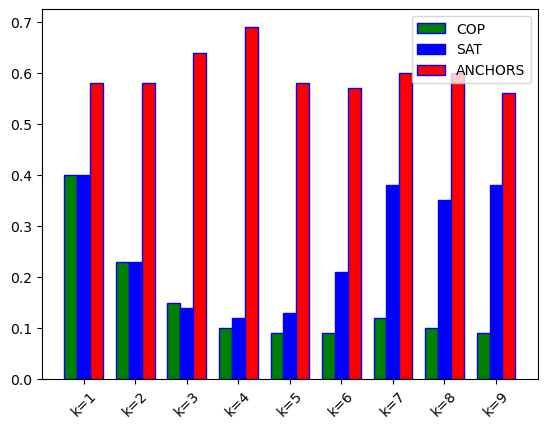}
        \caption{\emph{compas}}
        \label{fig.sizes.1}
    \end{subfigure}
    \hfill
    \begin{subfigure}{0.46\textwidth}
        \includegraphics[width=\textwidth]{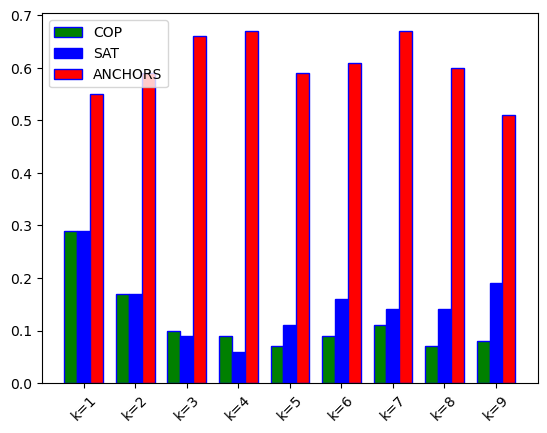}
        \caption{\emph{diabetes}}
        \label{fig.sizes.2}
    \end{subfigure}
    \hfill
    \begin{subfigure}{0.46\textwidth}
        \includegraphics[width=\textwidth]{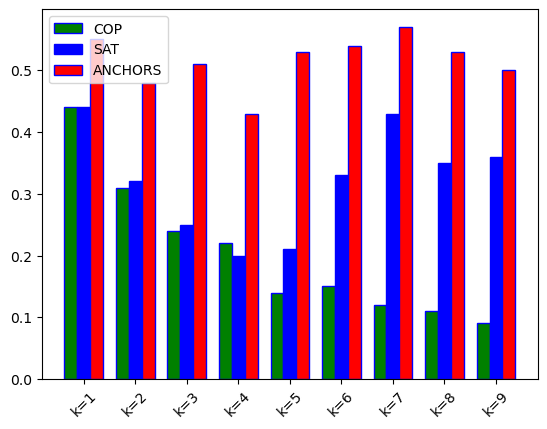}
        \caption{\emph{heart disease}}
        \label{fig.sizes.3}
    \end{subfigure}
    \hfill
    \begin{subfigure}{0.46\textwidth}
        \includegraphics[width=\textwidth]{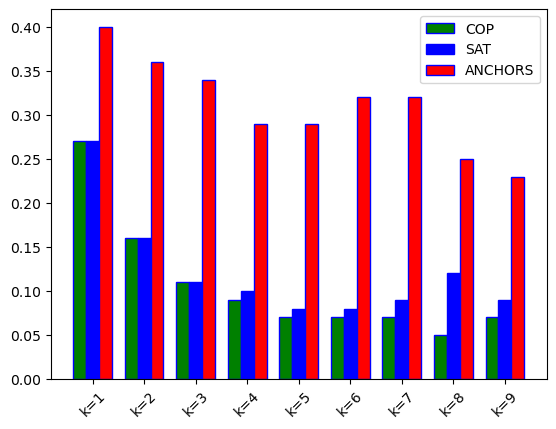}
        \caption{\emph{vote}}
        \label{fig.sizes.4}
    \end{subfigure}
                   
    \caption{Comparison of precision errors with increasing explanation sizes.}
    \label{fig.sizes}
\end{figure}

\section{Conclusion}

Interpretable Machine Learning faces the challenge of providing concise and precise explanations for black-box models. 
As most model-agnostic explainers are heuristic, we have proposed a learning framework that offers theoretical guarantees 
on the quality of explanations of a limited size. 
Our main result suggests that by learning an optimal monotone rule via constraint optimization, we can produce short explanations 
whose precision depends on the rule's ability to fit the black-box model. 
Evaluated on various benchmarks, our constraint encodings, \textsc{Cop} and \textsc{Sat}, outperform the state-of-the-art \textsc{Anchors} method.

\subsubsection*{Acknowledgements.}
Many thanks to the reviewers for their comments and suggestions. 
This work has benefited from the support of the AI Chair EXPEKCTATION (ANR-19- CHIA-0005-01) 
of the French National Research Agency. It was also partially supported by TAILOR, 
a project funded by EU Horizon 2020 research and innovation programme under GA No 952215.  

\bibliographystyle{splncs04}

\end{document}